\documentclass{article}

\usepackage[dblblindworkshop, final]{neurips_2026}
\workshoptitle{SLM-Agents: 1st Workshop on Small Language Models for Agentic Systems}

\usepackage[utf8]{inputenc} 
\usepackage[T1]{fontenc}    
\usepackage{hyperref}       
\usepackage{url}            
\usepackage{booktabs}       
\usepackage{amsfonts}       
\usepackage{nicefrac}       
\usepackage{microtype}      
\usepackage{xcolor}         
\usepackage{amsmath}
\usepackage{amssymb}
\usepackage{amsthm}
\newtheorem{theorem}{Theorem}[section]

\newtheorem{proposition}[theorem]{Proposition}
\newtheorem{definition}[theorem]{Definition}

\usepackage[table]{xcolor}
\usepackage{bm}

\usepackage{tikz}
\usetikzlibrary{arrows.meta,positioning,fit,calc,matrix,decorations.pathreplacing}
\usepackage[normalem]{ulem} 

\title{Kronecker Product Attention: Memory-Efficient KV Compression via Structured Factorization}

\author{%
  Bahareh Nikpour$^{1}$ \quad
  Hossein Rajabzadeh$^{2}$\thanks{This work was done while the author was affiliated with Huawei Canada.} \quad
  Maryam Dialameh$^{2}$\footnotemark[1] \\
  \textbf{Vahid Partovi Nia}$^{3}$ \quad
  \textbf{Parsa Omidi}$^{1}$ \\[6pt]
  $^{1}$Huawei Technologies, Toronto, Canada \quad
  $^{2}$University of Waterloo, Waterloo, Canada \\
  $^{3}$Huawei Noah's Ark Lab \quad\\[4pt]
  \texttt{bahar.nikpour@h-partners.com}, \;
  \texttt{\{hossein.rajabzadeh, maryam.dialameh\}@uwaterloo.ca}, \\
  \texttt{\{vahid.partovinia, parsa.omidi\}@huawei.com}
}

\begin{document}

\maketitle

\begin{abstract}
Efficient autoregressive generation is increasingly limited by the memory
footprint of the key-value (KV) cache, which grows linearly with sequence
length, batch size, and model depth. Existing approaches such as Multi-Query Attention (MQA), Grouped-Query Attention (GQA),
Multi-Head Latent Attention (MLA), and Tensor Product Attention (TPA) reduce this cost through different trade-offs, but often
sacrifice expressivity, require additional positional-encoding machinery,
or impose restrictive low-rank structure.
We propose Kronecker Product Attention (KPA), which represents key and value
activations as sums of Kronecker products of compact, token-dependent matrix
factors. KPA strictly generalizes TPA: rank-one vector-outer-product factorization
arises as a degenerate Kronecker split, while non-degenerate matrix factors can
represent richer block-structured patterns. We further show that a structured rotary
embedding operator matched to the Kronecker representation preserves the
relative-position property of RoPE while rotating only the compact factors,
removing the need for a decoupled RoPE branch as in MLA.
Empirically, KPA matches or exceeds TPA, MHA, MQA, GQA, and MLA across
four model scales and nine downstream benchmarks under identical training
conditions, while achieving up to $10\times$ KV cache compression.
The Kronecker head split is a tuneable hyperparameter that allows
practitioners to navigate the compression--accuracy trade-off at design
time without architectural changes.
\end{abstract}

\section{Introduction}
The deployment of large-scale Transformer-based language models is increasingly bottlenecked by inference-time memory rather than parameter count. At the core of this bottleneck is the key-value (KV) cache, which stores projected keys and values across every layer and token so they do not need to be recomputed during decoding. This cache grows linearly with sequence length, batch size, and head dimensionality. 
As context windows extend to hundreds of thousands of tokens and batch sizes increase for throughput, the KV cache can dominate total GPU memory consumption, leaving little headroom for model weights or activations and directly limiting the context length, batch size, and deployment scale that practitioners can afford, motivating a broad line of work on KV compression.

Existing approaches address this along several lines. Token eviction and sparse
attention methods risk discarding relevant tokens \citep{zhang2023h2o, xiao2023streamingllm}. Others propose off-chip storage of key-value states, trading memory pressure for increased I/O latency. At the architectural level, Multi-Query Attention (MQA) \cite{shazeer2019mqa} and Grouped-Query Attention (GQA) \citep{ainslie2023gqa} share keys and values across heads at the cost of
representational diversity. Multi-Head Latent Attention (MLA) \citep{liu2024deepseekv2} avoids this by caching compressed latent representations instead, but requires a decoupled RoPE branch, in which a small set of uncompressed key dimensions carry the rotary embedding and are concatenated to the latent key at each decoding step. Tensor Product Attention (TPA) imposes low-rank CP (Canonical Polyadic) structure directly on KV projections, storing only the
low-dimensional factors\citep{zhang2025tpa}. However, CP decomposition operates on vectors and
cannot jointly encode structure at both the head level and the feature level
within each head. This limitation is important as empirical studies have shown that attention heads specialize into distinct functional roles, with different subsets attending to syntactic, semantic, and positional patterns, suggesting that an effective KV factorization should operate at both levels simultaneously \citep{michel2019sixteen, voita2019analyzing}.

\begin{figure}[t]
  \centering
  \includegraphics[width=0.9\linewidth]{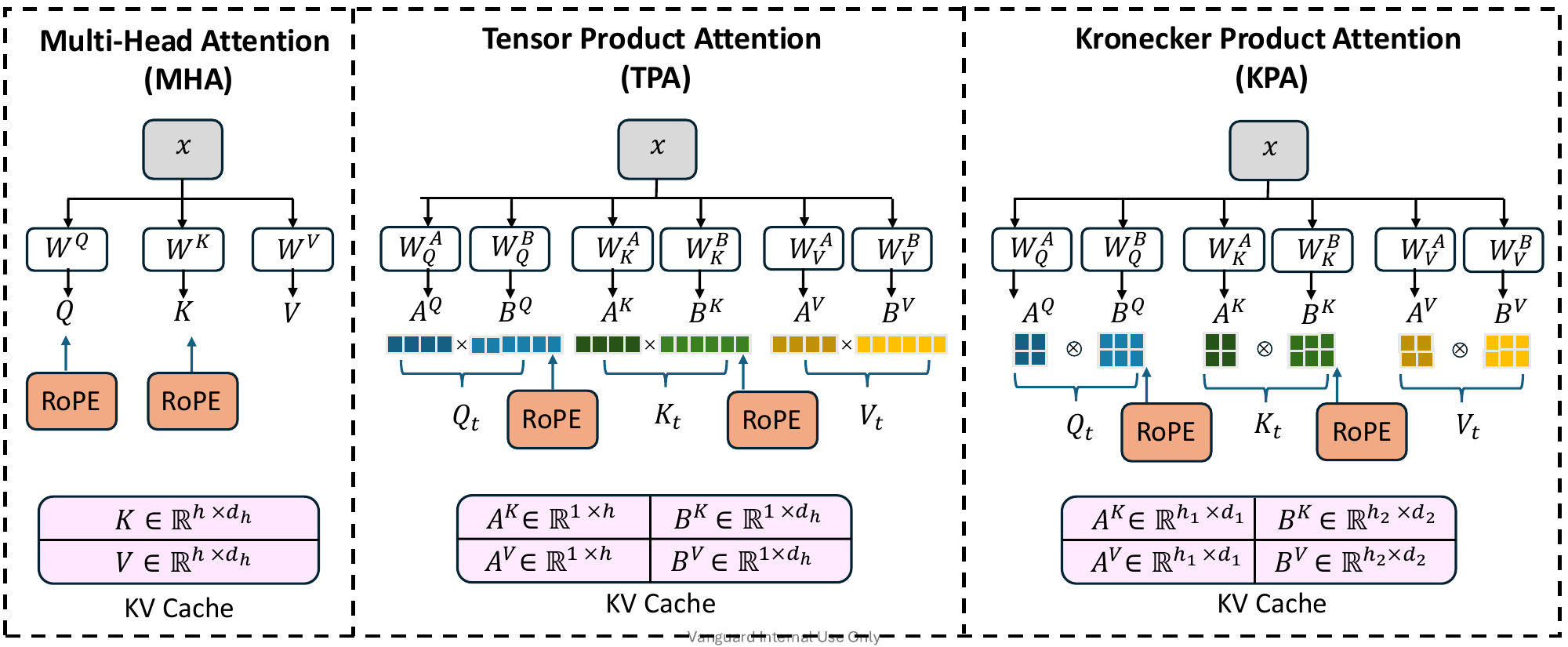}
  \caption{Overview of Multi-Head Attention (MHA), Tensor Product Attention (TPA), and
Kronecker Product Attention (KPA, ours). MHA projects $x$ directly into dense $Q,K,V$.
TPA and KPA instead factor each projection into two components, combined via an outer
product ($\times$) for TPA or a Kronecker product ($\otimes$) for KPA. RoPE is applied
only to $Q$ and $K$ (never $V$), and rotates only the $B$ factor in both methods
(Theorem~\ref{thm:rope} formalizes this for KPA); $A$ is cached unrotated. Since TPA's
$A,B$ are vectors, their outer product is a flat, unstructured matrix; since KPA's
$A,B$ are matrices, their Kronecker product is block-structured with a two-level
head/feature hierarchy (shown by the internal grid lines). Only $K,V$ factors enter
the KV cache; $Q$ is not cached. The token index $t$ is omitted throughout except on
the final $Q_t,K_t,V_t$, for clarity.}
  \label{fig:overview-panels}
\end{figure}

Motivated by this, we use Kronecker product structure to represent per-token
key and value matrices via two smaller contextual matrix factors. Unlike CP
decomposition which operates on vectors, the Kronecker product expands two
small matrices into a larger block-structured matrix with the same shape as
a standard key or value slice, making it naturally suited to multi-head
attention where both the head axis and the within-head feature axis have
product structure. Among structured factorizations, the Kronecker product
is distinguished by combining three properties in a single operation: its
mixed-product identity lets inner products between Kronecker-structured
queries and keys factor into products of smaller inner products; it admits
a Kronecker-separable rotary operator that acts on only one compact factor;
and a rank-1 outer product, which is the basic building block of CP decomposition, is a special case, making Kronecker factorization strictly more general
than CP at the same parameter budget. We note that Kronecker structure has also been explored in attention
mechanisms for encoder models operating on multi-dimensional
inputs \citep{omranpour2024higher}, reducing training-time
computation rather than KV cache size. That setting differs fundamentally
from autoregressive decoding, which is our focus. 

We introduce \textbf{K}ronecker \textbf{P}roduct \textbf{A}ttention \textbf{(KPA)}, a structured attention mechanism that applies this factorization to the query, key, and value projections of each attention head. This yields a family of KV compression schemes parameterized by the factor dimensions, allowing flexible trade-offs between compression ratio, head structure, and representational capacity. We provide a theoretical analysis characterizing the expressivity of KPA as a function of rank and factor dimensions, and prove that a modified frequency schedule we call Kronecker-Separable RoPE integrates seamlessly with the factorized KV representation without materializing the full key or value tensors and without breaking the standard relative-position property.
To evaluate KPA empirically, we train models at four scales including 124M, 353M, 773M, and 1.5B parameters on the FineWeb-Edu corpus. We compare KPA variants against TPA at matched compression ratios, finding that KPA matches or exceeds TPA across all scales. Figure 1 summarizes the contrast between MHA, TPA, and KPA. Our contributions are as follows:
\begin{itemize}

\item We propose Kronecker Product Attention (KPA), the first application
of Kronecker product factorization to \emph{per-token, contextual} key and
value representations for KV cache compression in autoregressive decoding.
Unlike prior Kronecker attention work that decomposes static attention score
matrices for encoder models, KPA operates on
input-dependent factors and directly reduces inference-time memory.

\item We introduce Kronecker-Separable RoPE, which applies rotary position
encoding directly to the compact key factors, preserving the standard
relative-position property without a decoupled RoPE branch as in MLA.

\item We train KPA at four scales (125M--1.5B) on FineWeb-Edu under
identical conditions to TPA, showing KPA matches or exceeds TPA, MHA, MQA,
GQA, and MLA across nine benchmarks while achieving $6.4\times$--$10\times$
KV cache compression.

\end{itemize}

\section{Kronecker Product Attention}
\label{sec:kpa}

The key-value cache in standard multi-head attention stores, for each token $t$,
a pair of matrices $\bm{K}_t, \bm{V}_t \in \mathbb{R}^{h \times d_h}$, whose
entries are dense and unstructured. We argue that this is wasteful: attention
heads are known to specialise into distinct functional roles~\citep{michel2019sixteen,
voita2019analyzing}, suggesting that the \emph{head-level} dimension of $\bm{K}_t$
and $\bm{V}_t$ (i.e., how information is shared or differs across heads) carries
structured, differentiable information. We hypothesise that the \emph{feature-level}
dimension (variation across features within a single head) benefits from an
analogous structure, and design KPA.
The proposed KPA makes this two-level structure explicit by factorising each key and value
matrix as a sum of Kronecker products of two smaller matrices, one operating on
a coarse partition of the head space, the other on the per-head feature space
(formalised in Section~\ref{sec:factorisation}), and caching only these factors.
The result is a compression scheme that is structurally aligned with the
functional organisation of multi-head attention in a way that CP decomposition,
which operates on vectors rather than matrices, cannot achieve.

\subsection{Background and Notation}
\label{sec:background}

\paragraph{Notation.}
Bold uppercase denotes matrices ($\bm{K}_t, \bm{V}_t, \bm{Q}_t, \bm{A},
\bm{B}, \bm{W}^K$); bold lowercase denotes vectors ($\bm{q}_t, \bm{k}_t,
\bm{v}_t, \bm{x}_t$); unbolded lowercase denotes scalars and indices
($h, d_h, d_{\mathrm{model}}, R_K, t, i$); and calligraphic uppercase denotes
higher-order tensors ($\mathcal{K}, \mathcal{V}, \mathcal{Q}$). We write
$\otimes$ for the Kronecker product, $\bm{I}_n$ for the $n \times n$ identity, and $\bm{R}_\theta(t)$ for the RoPE
rotation at position $t$. Subscripts index sequence positions; superscripts
identify projections. We set $[n] := \{1, \ldots, n\}$, and all vectors are
column vectors unless stated otherwise.

\begin{definition}[Kronecker product]
For matrices $\bm{A}\in\mathbb{R}^{m_1\times n_1}$ and
$\bm{B}\in\mathbb{R}^{m_2\times n_2}$, the Kronecker product
$\bm{A}\otimes\bm{B}\in\mathbb{R}^{m_1m_2\times n_1n_2}$ is defined as
\[
\bm{A}\otimes\bm{B}
=
\begin{pmatrix}
a_{11}\bm{B} & \cdots & a_{1n_1}\bm{B}\\
\vdots & \ddots & \vdots\\
a_{m_1 1}\bm{B} & \cdots & a_{m_1 n_1}\bm{B}
\end{pmatrix}.
\]
\end{definition}
The central algebraic property used in KPA is the mixed-product identity
\citep{vanloan1993approximation} which states for conformable matrices
$\bm{M},\bm{N},\bm{P},\bm{Q}$, we can write:
\begin{equation}
(\bm{M}\otimes\bm{N})(\bm{P}\otimes\bm{Q})
= (\bm{M}\bm{P})\otimes(\bm{N}\bm{Q}).
\label{eq:mixed-product}
\end{equation} 

This identity underlies both the factored score computation
(Section~\ref{sec:scores}) and the RoPE compatibility proof
(Section~\ref{sec:rope}).

\subsection{Factorised Key, Value, and Query Representations}
\label{sec:factorisation}

Let $h = h_1 h_2$ and $d_h = d_1 d_2$, where $h_1, h_2, d_1, d_2$ are
hyperparameters that control the two-level structure. For each token $t$ with hidden state
$\bm{x}_t \in \mathbb{R}^{d_{\mathrm{model}}}$, KPA represents the key, value,
and query slices as normalized sums of contextual Kronecker products:
\begin{align}
  \bm{Q}_t &= \frac{1}{R_Q}\sum_{r=1}^{R_Q}
    \bm{A}^Q_r(\bm{x}_t) \otimes \bm{B}^Q_r(\bm{x}_t)
    \;\in \mathbb{R}^{h \times d_h}, \label{eq:Q} \\[4pt]
  \bm{K}_t &= \frac{1}{R_K}\sum_{r=1}^{R_K}
    \bm{A}^K_r(\bm{x}_t) \otimes \bm{B}^K_r(\bm{x}_t)
    \;\in \mathbb{R}^{h \times d_h}, \label{eq:K} \\[4pt]
  \bm{V}_t &= \frac{1}{R_V}\sum_{r=1}^{R_V}
    \bm{A}^V_r(\bm{x}_t) \otimes \bm{B}^V_r(\bm{x}_t)
    \;\in \mathbb{R}^{h \times d_h}, \label{eq:V}
\end{align}
where $\bm{A}^\cdot_r(\bm{x}_t) \in \mathbb{R}^{h_1 \times d_1}$ operates on
the head-partition subspace $\mathbb{R}^{h_1 \times d_1}$ and
$\bm{B}^\cdot_r(\bm{x}_t) \in \mathbb{R}^{h_2 \times d_2}$ operates on the
within-head subspace $\mathbb{R}^{h_2 \times d_2}$. Both factors are
fully contextual meaning they depend on $\bm{x}_t$ through independent learned linear
projections as:
\begin{equation}
  \bm{A}^K_r(\bm{x}_t) = \bm{W}^{A^K}_r\,\bm{x}_t, \qquad
  \bm{B}^K_r(\bm{x}_t) = \bm{W}^{B^K}_r\,\bm{x}_t,
  \label{eq:latent-maps}
\end{equation}
with $\bm{W}^{A^K}_r \in \mathbb{R}^{(h_1 d_1)\times d_{\mathrm{model}}}$ and
$\bm{W}^{B^K}_r \in \mathbb{R}^{(h_2 d_2)\times d_{\mathrm{model}}}$ reshaped
after multiplication. Analogous maps define the query and value factors. In
practice, we merge the rank index into aggregate weight matrices
$\bm{W}^{A^K} \in \mathbb{R}^{(R_K h_1 d_1)\times d_{\mathrm{model}}}$ and
$\bm{W}^{B^K} \in \mathbb{R}^{(R_K h_2 d_2)\times d_{\mathrm{model}}}$,
then reshape to $\mathcal{A}^K(\bm{x}_t) \in \mathbb{R}^{R_K \times h_1
\times d_1}$ and $\mathcal{B}^K(\bm{x}_t) \in \mathbb{R}^{R_K \times h_2
\times d_2}$, enabling a single batched projection per token.

The ranks $R_Q, R_K, R_V$ are independent. Since queries are never cached,
one may freely set $R_Q \gg R_K, R_V$ to enrich query expressivity at no
memory cost. Only the $\bm{K}$ and $\bm{V}$ factors enter the cache.

It should be noted that setting $h_1 = h$, $d_1 = 1$, $h_2 = 1$, $d_2 = d_h$ makes $\bm{A}_r \in \mathbb{R}^{h \times 1}$
and $\bm{B}_r \in \mathbb{R}^{1 \times d_h}$ vectors, so each Kronecker term reduces to
a rank-one outer product, exactly recovering TPA~\citep{zhang2025tpa}. Non-degenerate
Kronecker factors are strictly richer: $\mathrm{rank}(\bm{A}_r \otimes \bm{B}_r) =
\mathrm{rank}(\bm{A}_r) \cdot \mathrm{rank}(\bm{B}_r)$, giving KPA strictly greater
representational capacity at matched parameter budgets.

\subsection{Factored Attention Score Computation}
\label{sec:scores}

Heads in KPA are indexed by pairs $(i_1, i_2)$ with $i_1 \in [h_1]$, $i_2 \in [h_2]$.
Denoting the $i_1$-th row of $\bm{A}^Q_r$ as $(\bm{A}^Q_r)_{i_1} \in \mathbb{R}^{d_1}$,
the mixed-product identity~\eqref{eq:mixed-product} gives an exact factorisation of the
attention score (scaled by $1/\sqrt{d_h}$ before softmax in practice):

\begin{proposition}[Factored Score Computation]
\label{prop:score}
The attention score for head $(i_1, i_2)$ at positions $t$ and $s$ satisfies
\begin{equation}
  \mathrm{score}_{(i_1,i_2)}(t,s)
  = \frac{1}{R_Q R_K}
    \sum_{r=1}^{R_Q}\sum_{r'=1}^{R_K}
    \underbrace{(\bm{A}^Q_r)_{i_1}^\top (\bm{A}^K_{r'})_{i_1}}_{\text{head-level}}
    \underbrace{(\bm{B}^Q_r)_{i_2}^\top (\bm{B}^K_{r'})_{i_2}}_{\text{feature-level}}.
  \label{eq:score}
\end{equation}
\end{proposition}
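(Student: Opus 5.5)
The plan is to reduce the statement to a single Kronecker term and then extend it by bilinearity. First I fix the head-indexing convention. Since $\bm{Q}_t,\bm{K}_t\in\mathbb{R}^{h\times d_h}$ with $h=h_1h_2$ and $d_h=d_1d_2$, I identify head $(i_1,i_2)$ with row index $(i_1-1)h_2+i_2$. The score for that head is then the inner product of the corresponding rows of $\bm{Q}_t$ and $\bm{K}_s$, with factors $\bm{A}^Q_r=\bm{A}^Q_r(\bm{x}_t)$ and $\bm{A}^K_{r'}=\bm{A}^K_{r'}(\bm{x}_s)$, and similarly for $\bm{B}$. The $1/\sqrt{d_h}$ scaling is ignored, as the statement permits.

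The first key step is a row lemma. For $\bm{A}\in\mathbb{R}^{h_1\times d_1}$ and $\bm{B}\in\mathbb{R}^{h_2\times d_2}$, the row of $\bm{A}\otimes\bm{B}$ with index $(i_1-1)h_2+i_2$ equals $\big((\bm{A})_{i_1}\otimes(\bm{B})_{i_2}\big)^\top$. I will prove this in one of two ways. The first is to read it off the block definition: block-row $i_1$ is $(a_{i_1 1}\bm{B},\dots,a_{i_1 d_1}\bm{B})$, and its $i_2$-th row is $(a_{i_1 1}\bm{B}_{i_2},\dots)$. The second is to write the row as $(\bm{e}_{i_1}\otimes\bm{e}_{i_2})^\top(\bm{A}\otimes\bm{B})$, which equals $(\bm{e}_{i_1}^\top\bm{A})\otimes(\bm{e}_{i_2}^\top\bm{B})$ by the mixed-product identity~\eqref{eq:mixed-product}.

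The second step handles a single pair of terms. For rank indices $r,r'$, I apply~\eqref{eq:mixed-product} again to get
\[
\big((\bm{A}^Q_r)_{i_1}\otimes(\bm{B}^Q_r)_{i_2}\big)^\top\big((\bm{A}^K_{r'})_{i_1}\otimes(\bm{B}^K_{r'})_{i_2}\big)
=\big((\bm{A}^Q_r)_{i_1}^\top(\bm{A}^K_{r'})_{i_1}\big)\otimes\big((\bm{B}^Q_r)_{i_2}^\top(\bm{B}^K_{r'})_{i_2}\big).
\]
This is a Kronecker product of two $1\times1$ matrices, so it is the ordinary product of the two scalars.

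The third step assembles the result. Extracting a row is linear, and the inner product is bilinear. Expanding the normalized sums~\eqref{eq:Q} and~\eqref{eq:K} therefore gives the double sum over $r\in[R_Q]$ and $r'\in[R_K]$ with prefactor $1/(R_QR_K)$, which is exactly~\eqref{eq:score}.

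I do not expect a real obstacle. The only point needing care is the row-index bookkeeping in the row lemma: checking that the row ordering induced by the Kronecker product agrees with the stated head pairing $(i_1,i_2)$, and that each head's feature vector has length $d_1d_2=d_h$.
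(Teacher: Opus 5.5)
Your proposal is correct and follows the same route as the paper's proof. Both write each head's query and key as the normalized sum of $(\bm{A})_{i_1}\otimes(\bm{B})_{i_2}$ terms, apply $(\bm{a}\otimes\bm{b})^\top(\bm{c}\otimes\bm{d})=(\bm{a}^\top\bm{c})(\bm{b}^\top\bm{d})$ termwise, and expand bilinearly; the only addition is that you prove the row-extraction lemma (row $(i_1-1)h_2+i_2$ of $\bm{A}\otimes\bm{B}$ equals $\bigl((\bm{A})_{i_1}\otimes(\bm{B})_{i_2}\bigr)^\top$), which the paper simply asserts when writing down the per-head vectors.
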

\begin{proof}See Appendix A.\end{proof}

Every attention score decomposes into a product of a \emph{head-level} and a
\emph{feature-level} inner product, reflecting the two-level structure of KPA.
Proposition~\ref{prop:score} shows that scores can in principle be computed directly
from the compact factors without materializing the full key tensor, analogously to
factored score computation in TPA~\citep{zhang2025tpa}.

\subsection{Kronecker-Separable RoPE}
\label{sec:rope}

A natural concern with any structured KV factorisation is compatibility with
rotary positional embeddings (RoPE)~\citep{su2024rope}. MLA resolves this by
maintaining a small set of non-compressed KV dimensions that carry the rotary
encoding, introducing a decoupled RoPE branch at every decoding step. We show
that KPA requires no such branch.

\begin{definition}[Kronecker-Separable RoPE]
\label{def:ks-rope}
Let $\bm{I}_{d_1}$ be the $d_1\times d_1$ identity and
$\bm{R}^{(2)}_\theta(t)\in\mathbb{R}^{d_2\times d_2}$ be a RoPE rotation matrix
with frequencies calibrated to $d_2$:
$\theta_j = 10000^{-2j/d_2}$, $j=0,\ldots,d_2/2-1$.
The \emph{Kronecker-Separable RoPE} operator on $\mathbb{R}^{d_h}$ is
$\bm{R}_{\mathrm{KPA}}(t) = \bm{I}_{d_1}\otimes \bm{R}^{(2)}_\theta(t)$.
\end{definition}

\begin{theorem}[RoPE Compatibility of KPA]
\label{thm:rope}
Under Kronecker-Separable RoPE, applying
$\bm{I}_{d_1} \otimes \bm{R}^{(2)}_\theta(\cdot)$ to the query or key vector
of head $(i_1, i_2)$ is equivalent to rotating only the $\bm{B}$ factor:
\begin{align}
  \bigl(\bm{I}_{d_1} \otimes \bm{R}^{(2)}_\theta(t)\bigr)
  \bigl((\bm{A}^Q_r)_{i_1} \otimes (\bm{B}^Q_r)_{i_2}\bigr)
  &= (\bm{A}^Q_r)_{i_1} \otimes \bm{R}^{(2)}_\theta(t)(\bm{B}^Q_r)_{i_2}, \\[4pt]
  \bigl(\bm{I}_{d_1} \otimes \bm{R}^{(2)}_\theta(s)\bigr)
  \bigl((\bm{A}^K_{r'})_{i_1} \otimes (\bm{B}^K_{r'})_{i_2}\bigr)
  &= (\bm{A}^K_{r'})_{i_1} \otimes \bm{R}^{(2)}_\theta(s)(\bm{B}^K_{r'})_{i_2}.
\end{align}
Moreover, the attention score depends only on the relative offset $s - t$:
\begin{equation}
  \mathrm{score}_{(i_1,i_2)}(t,s)
  = \frac{1}{R_Q R_K}\sum_{r,r'}
    \Bigl[(\bm{A}^Q_r)_{i_1}^\top (\bm{A}^K_{r'})_{i_1}\Bigr]
    \cdot
    \Bigl[(\bm{B}^Q_r)_{i_2}^\top\,
          \bm{R}^{(2)}_\theta(s-t)\,
          (\bm{B}^K_{r'})_{i_2}\Bigr].
  \label{eq:rope-score}
\end{equation}
\end{theorem}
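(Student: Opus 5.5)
The plan is to reduce everything to the mixed-product identity~\eqref{eq:mixed-product} and the standard relative-position property of ordinary RoPE on $\mathbb{R}^{d_2}$. Fix a head $(i_1,i_2)$ and view the query and key vectors of that head as elements of $\mathbb{R}^{d_h}=\mathbb{R}^{d_1 d_2}$. By the block structure of the Kronecker product, the head-$(i_1,i_2)$ row of $\bm{A}_r\otimes\bm{B}_r$ is $(\bm{A}_r)_{i_1}\otimes(\bm{B}_r)_{i_2}$. Here the row index is $(i_1-1)h_2+i_2$, and the columns are indexed lexicographically by (feature of $\bm{A}$, feature of $\bm{B}$).

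For the first pair of identities, treat $(\bm{A}^Q_r)_{i_1}\in\mathbb{R}^{d_1}$ and $(\bm{B}^Q_r)_{i_2}\in\mathbb{R}^{d_2}$ as $d_1\times 1$ and $d_2\times 1$ matrices. Then apply~\eqref{eq:mixed-product} with $\bm{M}=\bm{I}_{d_1}$, $\bm{N}=\bm{R}^{(2)}_\theta(t)$, $\bm{P}=(\bm{A}^Q_r)_{i_1}$ and $\bm{Q}=(\bm{B}^Q_r)_{i_2}$. This gives $(\bm{I}_{d_1}(\bm{A}^Q_r)_{i_1})\otimes(\bm{R}^{(2)}_\theta(t)(\bm{B}^Q_r)_{i_2})$, which is the claimed expression; the key case at position $s$ is identical. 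By linearity of $\otimes$ in each argument, the rotation passes through the normalized sums over $r$ and $r'$. So the full rotated query and key of head $(i_1,i_2)$ are $\frac{1}{R_Q}\sum_r (\bm{A}^Q_r)_{i_1}\otimes \bm{R}^{(2)}_\theta(t)(\bm{B}^Q_r)_{i_2}$ and the analogous sum for $\bm{K}$. This also shows that only the $\bm{B}$ factor has to be rotated, so the cached $\bm{A}^K$ is position-free.

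For the relative-offset claim, write the score as the inner product of the rotated query and key vectors. Expand both sums bilinearly to get the double sum over $(r,r')$. Each term has the form $(\bm{a}\otimes\bm{R}(t)\bm{b})^\top(\bm{a}'\otimes\bm{R}(s)\bm{b}')$. Using $(\bm{X}\otimes\bm{Y})^\top=\bm{X}^\top\otimes\bm{Y}^\top$ and~\eqref{eq:mixed-product} again, this equals $(\bm{a}^\top\bm{a}')\otimes(\bm{b}^\top\bm{R}(t)^\top\bm{R}(s)\bm{b}')$, a product of two scalars. This recovers the structure of Proposition~\ref{prop:score}, with the rotations inserted into the feature-level factor.

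The last step is the identity $\bm{R}^{(2)}_\theta(t)^\top\bm{R}^{(2)}_\theta(s)=\bm{R}^{(2)}_\theta(s-t)$. I would verify it blockwise. $\bm{R}^{(2)}_\theta(t)$ is block diagonal with $2\times 2$ planar rotations by angle $t\theta_j$. Each such block is orthogonal, so its transpose is the rotation by $-t\theta_j$, and planar rotations compose additively in angle. This requires $d_2$ to be even, which I would state as part of the definition's implicit assumption. Substituting gives~\eqref{eq:rope-score}, whose dependence on $(t,s)$ is only through $s-t$.

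The main obstacle is bookkeeping rather than depth. One must check that the $i_1$-th and $i_2$-th row extraction commutes with the Kronecker product under the paper's indexing convention. One must also check that $\bm{I}_{d_1}\otimes\bm{R}^{(2)}$ acts on the feature index in the same lexicographic order, with the $\bm{A}$-feature outer and the $\bm{B}$-feature inner. I would settle this first by writing the entry formula $(\bm{A}\otimes\bm{B})_{(i_1,i_2),(j_1,j_2)}=\bm{A}_{i_1j_1}\bm{B}_{i_2j_2}$ explicitly. Everything else then follows from the mixed-product identity.
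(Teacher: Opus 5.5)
Your proposal is correct and follows the paper's proof essentially step for step: the mixed-product identity with $\bm{M}=\bm{I}_{d_1}$ and $\bm{N}=\bm{R}^{(2)}_\theta(t)$ moves the rotation onto the $\bm{B}$ factor, linearity passes it through the normalized rank sums, and the RoPE identity $\bm{R}^{(2)}_\theta(t)^\top\bm{R}^{(2)}_\theta(s)=\bm{R}^{(2)}_\theta(s-t)$ gives the relative-offset score. Your explicit row-extraction indexing, your use of the transpose rule to reduce each term to a product of scalars, and your blockwise check of the RoPE identity (with $d_2$ even) fill in details the paper leaves implicit.
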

\begin{proof}See Appendix A.\end{proof}

Theorem~\ref{thm:rope} has two practical consequences: (i) only the $\bm{B}^K$
factors need to be pre-rotated before caching; the $\bm{A}^K$ factors are
position-independent and stored as-is, and (ii) the $\bm{B}$ factor structure
is preserved by the rotation, so positional encoding need not require
reconstructing the full key tensor. Unlike MLA, no decoupled RoPE branch
is required.
\subsection{KV Cache Cost and Compression}
\label{sec:cache}

During autoregressive decoding, KPA caches the key and value factors per token $t$:
\[
  \{\bm{A}^K_r(\bm{x}_t),\,\bm{B}^K_r(\bm{x}_t)\}_{r=1}^{R_K}
  \;\cup\;
  \{\bm{A}^V_r(\bm{x}_t),\,\bm{B}^V_r(\bm{x}_t)\}_{r=1}^{R_V},
\]
where $\bm{A}^K_r \in \mathbb{R}^{h_1 \times d_1}$ and
$\bm{B}^K_r \in \mathbb{R}^{h_2 \times d_2}$ (pre-rotated via
Kronecker-Separable RoPE). The per-token cache cost is
$(R_K+R_V)(h_1d_1+h_2d_2)$, compared with $2hd_h = 2h_1h_2d_1d_2$ for MHA.
The compression factor is:
\begin{equation}
  C_{\mathrm{KPA}}
  = \frac{2h_1h_2d_1d_2}{(R_K+R_V)(h_1d_1+h_2d_2)}.
  \label{eq:compression}
\end{equation}
For fixed $h$ and $d_h$, compression increases with factor asymmetry: a more
unequal split of $h$ or $d_h$ makes $h_1d_1 + h_2d_2$ smaller relative to
$h_1h_2d_1d_2$. 
\section{Experiments}
\label{sec:experiments}

\subsection{Experimental Setup}

We train KPA models at four scales (125M, 353M, 773M, 1.5B) using a Llama-inspired architecture with SwiGLU and RoPE on top of nanoGPT~\citep{karpathy2022nanogpt},
trained on FineWeb-Edu for $\sim$50B tokens with $R_K{=}R_V{=}2$ and $R_Q{=}6$.
We compare against MHA, MQA, GQA, MLA, and TPA baselines from
\citet{zhang2025tpa} under identical conditions, evaluating zero-shot on nine
benchmarks using \texttt{lm-evaluation-harness}~\citep{eval-harness}.
Full training and evaluation details are in Appendix C.

\subsection{Main Results}
\label{sec:main_results}

Tables~\ref{tab:353M} and~\ref{tab:773M} report zero-shot benchmark performance
at 353M and 773M scales; 1.5B and 125M results appear in Appendix D.

At 353M, all three KPA configurations outperform TPA. KPA~$h{=}(8{\times}2)$
achieves the best average at $6.4\times$ compression, matching TPA's ratio
while exceeding its accuracy by 1.49pp, with the largest gains on ARC-Easy
and SciQ. KPA~$h{=}(4{\times}4)$ outperforms TPA at strictly higher
compression ($8\times$), demonstrating that increasing compression need not
sacrifice accuracy. At 773M, KPA~$h{=}(10{\times}2)$ outperforms TPA, MLA, and all other
baselines at $6.7\times$ compression, with particularly strong gains on
ARC-Easy, BoolQ, and SciQ. Notably, KPA surpasses MLA despite requiring no
decoupled RoPE branch. KPA~$h{=}(4{\times}5)$ achieves better accuracy
than TPA at higher compression ($8.9\times$).

In both cases, the best-performing KPA variant uses an asymmetric split
with larger $h_1$, consistent with the ablation findings in
Section~\ref{sec:ablation}. KPA improves monotonically across scales and
matches or exceeds TPA at comparable or higher compression ratios at every scale.
\subsection{Ablation on Head-Split Asymmetry}
\label{sec:ablation}

We ablate the effect of the Kronecker head split $h_1 \times h_2 = h$,
holding $d_1 = d_2 = 8$ and $R_K = R_V = 2$ fixed. At 353M we compare
symmetric $h{=}(4{\times}4)$, asymmetric $h{=}(2{\times}8)$, and reversed
asymmetric $h{=}(8{\times}2)$; at 773M we compare $h{=}(4{\times}5)$,
$h{=}(2{\times}10)$, and $h{=}(10{\times}2)$. Tables~\ref{tab:353M} and~\ref{tab:773M} reveal a consistent pattern across
scales. Asymmetric splits with larger $h_1$ outperform both symmetric KPA
and TPA at both scales. At 353M, $h{=}(8{\times}2)$ achieves the best
average, and at 773M, $h{=}(10{\times}2)$ outperforms all baselines.
We attribute this to the $\bm{A}$-factor's capacity to encode joint
interactions across $h_1$ heads simultaneously in feature space. A larger
$h_1$ expands this capacity, whereas the $\bm{B}$-factor refines
within-head feature structure independently, which appears less critical.
This is distinct from MHA, where head projections are fully independent
by construction.
Symmetric splits trade some accuracy for higher compression
($8\times$ to $8.9\times$), offering a favourable operating point when
memory is the primary constraint. The head split $h_1 \times h_2$ is a
tuneable hyperparameter that allows practitioners to navigate the
compression--accuracy trade-off at design time, with no other architectural
changes required.

\begin{table*}[t]
\centering
\caption{Zero-shot evaluation on nine benchmarks (FineWeb-Edu, $\sim$50B tokens),
353M-parameter models. Best score per column in \textbf{bold}.
HellaSw.\ = HellaSwag, W.G.\ = WinoGrande.}

\label{tab:353M}
\setlength{\tabcolsep}{5pt}
\scriptsize
\begin{tabular}{lcccccccccr}
\toprule
\textbf{Method} & \textbf{ARC-E} & \textbf{ARC-C} & \textbf{BoolQ} & \textbf{HellaSw.} & \textbf{OBQA} & \textbf{PIQA} & \textbf{W.G.} & \textbf{MMLU} & \textbf{SciQ} & \textbf{Avg.} \\
\midrule
MHA   & 59.51 & 29.52 & 59.60 & 45.68 & 34.20 & 68.82 & 53.43 & 23.33 & 76.90 & 50.11 \\
MQA   & 57.62 & 31.91 & 59.45 & 45.69 & 35.40 & 69.31 & 53.51 & 26.47 & 74.60 & 50.44 \\
GQA   & 58.67 & 31.48 & 58.29 & 45.45 & 35.20 & 68.50 & 54.46 & 24.58 & 76.50 & 50.35 \\
MLA   & 56.65 & 29.52 & 57.83 & 46.05 & 34.60 & 69.42 & 52.80 & 24.62 & 79.70 & 50.13 \\
TPA   & 58.38 & 31.57 & 59.39 & \textbf{46.83} & \textbf{37.00} & 70.02 & 54.06 & 25.52 & 79.90 & 51.41 \\
\midrule
KPA $h{=}(4{\times}4)$ & 64.23 & 30.72 & 59.66 & 44.09 & 34.40 & 70.08 & 54.38 & \textbf{26.79} & \textbf{85.30} & 52.18 \\
KPA $h{=}(2{\times}8)$ & 64.15 & 31.48 & 57.21 & 45.93 & 36.20 & 70.18 & \textbf{54.92}& 26.31 & 83.90 & 52.25 \\
KPA $h{=}(8{\times}2)$  & \textbf{66.16} & \textbf{33.02} & \textbf{60.37} & 45.11 & 35.80 & \textbf{71.29} & 53.12 & \textbf{26.79} & 84.40 & \textbf{52.90} \\
\bottomrule
\end{tabular}
\end{table*}

\begin{table*}[t]
\centering
\caption{Zero-shot evaluation on nine benchmarks (FineWeb-Edu, $\sim$50B tokens),
773M-parameter models. Best score per column in \textbf{bold}.
HellaSw.\ = HellaSwag, W.G.\ = WinoGrande.}
\label{tab:773M}
\setlength{\tabcolsep}{5pt}
\scriptsize
\begin{tabular}{lcccccccccr}
\toprule
\textbf{Method} & \textbf{ARC-E} & \textbf{ARC-C} & \textbf{BoolQ} & \textbf{HellaSw.} & \textbf{OBQA} & \textbf{PIQA} & \textbf{W.G.} & \textbf{MMLU} & \textbf{SciQ} & \textbf{Avg.} \\
\midrule
MHA   & 59.93 & 33.62 & 61.93 & 50.63 & 36.00 & 71.06 & 55.41 & 22.87 & 81.20 & 52.52 \\
MQA   & 60.73 & 33.62 & 57.34 & 50.09 & 37.00 & 69.97 & 55.49 & \textbf{25.30} & 79.60 & 52.13 \\
GQA   & 61.66 & 34.30 & 58.72 & 49.85 & 38.40 & 71.16 & 53.75 & 25.23 & 77.60 & 52.30 \\
MLA   & 63.55 & 32.85 & 60.95 & \textbf{51.72} & 38.80 & 70.51 & 55.01 & 24.55 & 81.90 & 53.32 \\
TPA   & 63.22 & 35.58 & 60.03 & 51.26 & 36.80 & 71.44 & 55.56 & 24.77 & 79.60 & 53.14 \\
\midrule
KPA $h{=}(4{\times}5)$   & 68.56 & 35.92 & 60.09 & 50.28 & 36.80 & \textbf{72.85} & 55.33 & 25.26 & 85.90 & 54.55 \\
KPA $h{=}(2{\times}10)$  & 68.31 & 34.98 & 55.32 & 50.91 & 38.00 & 71.06 & 55.64 & 24.66 & 85.30 & 53.80 \\
KPA $h{=}(10{\times}2)$  & \textbf{68.89} & \textbf{36.01} & \textbf{62.03} & 50.38 & \textbf{39.40} & \textbf{72.85} & \textbf{55.98} & 24.52 & \textbf{86.00} & \textbf{55.12} \\
\bottomrule
\end{tabular}
\end{table*}
\vspace{-4mm}
\section{Conclusion}
\label{sec:conclusion}

We introduced Kronecker Product Attention (KPA), a structured factorization
of key and value projections that strictly generalizes TPA and integrates
natively with RoPE without requiring a decoupled branch as in MLA. KPA
matches or outperforms TPA, MHA, MQA, GQA, and MLA across all evaluated
scales under identical training conditions, while achieving substantial
KV cache compression. The Kronecker head split is a tuneable hyperparameter
that allows practitioners to navigate the compression--accuracy trade-off
at design time without architectural changes. By reducing the KV cache
footprint, KPA directly addresses the memory-bandwidth bottleneck that
limits language models deployment on resource-constrained devices such as smartphones,
robotics platforms, and embedded systems,  making it a natural fit for
agentic systems that must plan and act under strict memory budgets without
relying on cloud infrastructure. The same compression benefits apply
during fine-tuning for tool use and multi-step reasoning, where longer
context windows are critical but memory is scarce. Developing an efficient
factored decoding kernel and evaluating KPA on longer contexts where
cache compression is most practically impactful are natural next steps.

\bibliographystyle{plainnat}
\bibliography{kpa_references}

\newpage
\appendix

\newpage

\appendix

\section{Proofs}
\label{app:proofs}

\subsection{Proof of Proposition~\ref{prop:score}}

\begin{proof}
For head $(i_1, i_2)$, the per-head query and key vectors are
\[
  \bm{q}_{t,(i_1,i_2)}
  =
  \frac{1}{R_Q}
  \sum_r
  (\bm{A}^Q_r)_{i_1}
  \otimes
  (\bm{B}^Q_r)_{i_2},
  \qquad
  \bm{k}_{s,(i_1,i_2)}
  =
  \frac{1}{R_K}
  \sum_{r'}
  (\bm{A}^K_{r'})_{i_1}
  \otimes
  (\bm{B}^K_{r'})_{i_2}.
\]

Taking the dot product and applying the mixed-product identity
\[
(\bm{a}\otimes\bm{b})^\top
(\bm{c}\otimes\bm{d})
=
(\bm{a}^\top\bm{c})
(\bm{b}^\top\bm{d}),
\]
we obtain
\[
  \bm{q}_{t,(i_1,i_2)}^\top
  \bm{k}_{s,(i_1,i_2)}
  =
  \frac{1}{R_Q R_K}
  \sum_{r,r'}
  \bigl[
    (\bm{A}^Q_r)_{i_1}^\top
    (\bm{A}^K_{r'})_{i_1}
  \bigr]
  \bigl[
    (\bm{B}^Q_r)_{i_2}^\top
    (\bm{B}^K_{r'})_{i_2}
  \bigr].
  \qedhere
\]
\end{proof}

\subsection{Proof of Theorem~\ref{thm:rope}}

\begin{proof}
Applying
$\bm{I}_{d_1}\otimes\bm{R}^{(2)}_\theta(t)$
to
$(\bm{A}^Q_r)_{i_1}\otimes(\bm{B}^Q_r)_{i_2}$
and using the mixed-product identity gives
\[
  (\bm{I}_{d_1}\otimes\bm{R}^{(2)}_\theta(t))
  \bigl(
    (\bm{A}^Q_r)_{i_1}
    \otimes
    (\bm{B}^Q_r)_{i_2}
  \bigr)
  =
  (\bm{A}^Q_r)_{i_1}
  \otimes
  \bm{R}^{(2)}_\theta(t)
  (\bm{B}^Q_r)_{i_2}.
\]

Summing over $r$ and averaging by $1/R_Q$ preserves the factored form.

The relative-position identity then follows from
\[
  \bm{R}^{(2)}_\theta(t)^\top
  \bm{R}^{(2)}_\theta(s)
  =
  \bm{R}^{(2)}_\theta(s-t),
\]
which is the standard RoPE relative-position property restricted to the
$d_2$-dimensional subspace.
\qedhere
\end{proof}

\section{KV Cache Compression}
\label{app:cache}

During autoregressive decoding, KPA caches the factor matrices per past token
at a cost of
$(R_K+R_V)(h_1 d_1+h_2 d_2)$
per token, compared with
$2h_1 h_2 d_1 d_2$
for MHA. The compression factor is
\begin{equation}
  C_{\mathrm{KPA}}
  =
  \frac{2\,h_1 h_2 d_1 d_2}
  {(R_K+R_V)(h_1 d_1+h_2 d_2)}.
  \label{eq:compression-app}
\end{equation}

Table~\ref{tab:compression} reports $C_{\mathrm{KPA}}$ for all evaluated
configurations with $R_K=R_V=2$ and $d_1=d_2=8$.

\begin{table}[h]
\centering
\caption{KV cache compression factors for evaluated KPA configurations.}
\label{tab:compression}
\setlength{\tabcolsep}{6pt}
\small
\begin{tabular}{llr}
\toprule
\textbf{Scale} & \textbf{Config} & $C_{\mathrm{KPA}}$ \\
\midrule
353M & $h{=}(4{\times}4)$   & $8.0\times$ \\
353M & $h{=}(2{\times}8)$   & $6.4\times$ \\
353M & $h{=}(8{\times}2)$   & $6.4\times$ \\
773M & $h{=}(4{\times}5)$   & $8.9\times$ \\
773M & $h{=}(2{\times}10)$  & $6.7\times$ \\
773M & $h{=}(10{\times}2)$  & $6.7\times$ \\
1.5B & $h{=}(5{\times}5)$   & $10.0\times$ \\
\midrule
TPA  & all scales           & $6.4\times$ \\
\bottomrule
\end{tabular}
\end{table}

\section{Training and Evaluation Details}
\label{app:training}

\paragraph{Architecture and data.}
All models use the nanoGPT codebase~\citep{karpathy2022nanogpt} with SwiGLU
activations~\citep{shazeer2020glu} and RoPE embeddings~\citep{su2024rope}.
We fix $d_1 = d_2 = 8$ and vary $h_1, h_2$ across configurations to study
head-split asymmetry. Models are trained from scratch on
FineWeb-Edu-100B~\citep{penedo2024fineweb} for 50{,}000 iterations with a
batch size yielding approximately 1M tokens per iteration, totalling
$\sim$50B tokens.

\paragraph{Optimisation.}
We use AdamW with peak learning rate $6 \times 10^{-4}$ ($3 \times 10^{-4}$
at 773M and 1.5B scales), cosine decay to one-tenth of the peak, 2{,}000
warmup steps, and weight decay 0.1. All models train in bfloat16 on
NVIDIA H100 GPUs.

\paragraph{Evaluation.}
We evaluate zero-shot performance on nine benchmarks using
\texttt{lm-evaluation-harness}~\citep{eval-harness}: ARC-Easy, ARC-Challenge,
BoolQ, HellaSwag, OpenbookQA, PIQA, WinoGrande, MMLU, and SciQ. We report
accuracy-normalized scores for ARC-E, ARC-C, HellaSwag, OBQA, PIQA, and SciQ,
and standard accuracy for BoolQ, WinoGrande, and MMLU. The primary scalar
metric is the unweighted average across all nine tasks.
\section{Additional Experimental Results}
\label{app:additional}

\subsection{1.5B Scale}
\label{app:1p5B}

Since $h{=}25$ admits only the symmetric Kronecker split $h_1{=}h_2{=}5$,
a single KPA configuration is evaluated at the 1.5B scale. KPA outperforms
TPA and all other baselines on eight of nine benchmarks, with the only
exception being MMLU where MHA achieves a slightly higher score. The gains
are most pronounced on ARC-Easy, SciQ, and BoolQ, consistent with the
pattern observed at 353M and 773M scales. This confirms that KPA scales
favourably with model size even when the head split is constrained to a
single symmetric factorization.

\subsection{125M Scale}
\label{app:125M}

At 125M scale, all KPA configurations outperform TPA (46.21\%). Consistent
with the pattern at larger scales, asymmetric splits with larger $h_1$
perform better: $h{=}(6{\times}2)$ achieves the best average, followed by
$h{=}(4{\times}3)$, $h{=}(3{\times}4)$, and $h{=}(2{\times}6)$. The gains
over TPA are smaller than at 353M and 773M, which is expected as smaller
models have less capacity to benefit from richer factorization. Nevertheless,
KPA consistently outperforms all baselines including MHA, MQA, GQA, MLA,
and TPA across all four configurations.

\begin{table}[h]
\centering
\caption{Zero-shot evaluation of 1.5B-parameter models
(FineWeb-Edu, $\sim$50B tokens). Baselines from
\citet{zhang2025tpa}. Best score in each column is \textbf{bold}.}
\label{tab:1p5B}
\setlength{\tabcolsep}{4pt}
\scriptsize
\begin{tabular}{lcccccccccr}
\toprule
\textbf{Method} & \textbf{ARC-E} & \textbf{ARC-C} & \textbf{BoolQ} & \textbf{HellaSw.} & \textbf{OBQA} & \textbf{PIQA} & \textbf{W.G.} & \textbf{MMLU} & \textbf{SciQ} & \textbf{Avg.} \\
\midrule
MHA  & 64.81 & 35.41 & 61.90 & 54.32 & 37.20 & 72.74 & 55.80 & \textbf{25.44} & 82.80 & 54.49 \\
MQA  & 64.10 & 36.01 & 62.26 & 54.38 & 39.00 & 72.58 & 56.43 & 23.70 & 81.90 & 54.48 \\
GQA  & 63.68 & 35.92 & 60.46 & 54.17 & 38.40 & 73.56 & 56.27 & 24.77 & 81.70 & 54.33 \\
MLA  & 64.14 & 35.92 & 60.12 & 53.60 & 39.20 & 72.25 & 55.17 & 24.71 & 81.60 & 54.08 \\
TPA  & 66.71 & 36.52 & 61.38 & 54.03 & 40.40 & 72.52 & 56.83 & 24.49 & 82.20 & 55.01 \\
\midrule
KPA $h{=}(5{\times}5)$ & \textbf{70.62} & \textbf{38.74} & \textbf{63.26} & \textbf{54.60} & \textbf{41.00} & \textbf{73.72} & \textbf{57.20} & 24.38 & \textbf{85.50} & \textbf{56.56} \\
\bottomrule
\end{tabular}
\end{table}

\begin{table}[h]
\centering
\caption{Zero-shot evaluation of 125M-parameter models
(FineWeb-Edu, $\sim$50B tokens). Baselines from
\citet{zhang2025tpa}. Best score in each column is \textbf{bold}.}
\label{tab:125M}
\setlength{\tabcolsep}{4pt}
\scriptsize
\begin{tabular}{lcccccccccr}
\toprule
\textbf{Method} & \textbf{ARC-E} & \textbf{ARC-C} & \textbf{BoolQ} & \textbf{HellaSw.} & \textbf{OBQA} & \textbf{PIQA} & \textbf{W.G.} & \textbf{MMLU} & \textbf{SciQ} & \textbf{Avg.} \\
\midrule
MHA  & 50.63 & 26.96 & \textbf{59.39} & 36.18 & 32.00 & 64.96 & \textbf{51.85} & 23.40 & 70.30 & 46.19 \\
MQA  & 49.62 & 25.34 & 55.72 & 35.94 & 31.40 & 64.85 & 51.30 & 23.37 & 68.70 & 45.14 \\
GQA  & 48.70 & 25.68 & 56.15 & 35.58 & 31.40 & 64.91 & 51.62 & 23.12 & 68.20 & 45.04 \\
MLA  & 50.21 & 26.71 & 58.01 & 36.25 & \textbf{32.80} & 64.69 & 50.59 & \textbf{24.67} & 71.90 & 46.20 \\
TPA  & 51.26 & 27.39 & 57.00 & \textbf{36.68} & \textbf{32.80} & 64.47 & 49.72 & 24.61 & 72.00 & 46.21 \\
\midrule
KPA $h{=}(3{\times}4)$ & 52.10 & 27.30 & 57.58 & 35.90 & 32.20 & 65.10 & 50.80 & 24.50 & 73.80 & 46.59 \\
KPA $h{=}(4{\times}3)$ & 52.89 & 27.52 & 57.84 & 36.06 & 32.40 & 65.30 & 51.10 & 24.55 & 74.20 & 46.87 \\
KPA $h{=}(2{\times}6)$ & 52.30 & 27.20 & 57.22 & 35.78 & 32.20 & 65.09 & 50.90 & 24.40 & 73.50 & 46.51 \\
KPA $h{=}(6{\times}2)$ & \textbf{53.51} & \textbf{27.66} & 57.90 & 35.70 & 32.60 & \textbf{65.60} & 51.20 & 24.59 & \textbf{74.80} & \textbf{47.06} \\
\bottomrule
\end{tabular}
\end{table}

\section{Related Work}
\label{app:related}

\subsection{Head sharing}

MQA \citep{shazeer2019mqa} reduces the KV cache by sharing a single key
and value head across all query heads, and GQA \citep{ainslie2023gqa}
generalizes this by grouping query heads to share a smaller set of key-value
heads, recovering a spectrum between MQA and full MHA.
MLA \citep{liu2024deepseekv2} instead projects keys and values into a
low-dimensional latent space before caching, achieving stronger compression,
but requires a decoupled RoPE branch to handle positional encoding.
KPA avoids head sharing entirely and eliminates the decoupled RoPE branch
by exploiting the Kronecker structure of its projections.

\subsection{Low-rank and structured attention}

Linformer \citep{wang2020linformer} approximates the attention matrix via
low-rank projection of keys and values along the sequence dimension, while
Performer \citep{choromanski2021performer} approximates the softmax kernel
with random features. These methods target the quadratic complexity of
attention rather than KV cache size and do not apply to autoregressive
decoding. Monarch \citep{dao2022monarch} and Tensorizing-NN
\citep{novikov2015tensorizing} apply structured matrix factorizations to
weight matrices for parameter efficiency, but do not specifically target
KV cache compression. TPA \citep{zhang2025tpa} is the closest prior work,
applying CP decomposition to KV projections; KPA strictly generalizes TPA
by replacing rank-1 vector outer products with Kronecker matrix factors.

\subsection{KV compression, eviction, and quantization}

H2O \citep{zhang2023h2o} and StreamingLLM \citep{xiao2024streamingllm}
reduce cache size by evicting tokens deemed less important based on
attention scores, but risk discarding tokens that become relevant later.
KIVI \citep{liu2024kivi} and KVQuant \citep{hooper2024kvquant}
compress the cache through low-bit quantization, trading precision for
memory. PagedAttention \citep{kwon2023paged} improves memory management
through virtual paging but does not reduce the total cache footprint.
KPA takes a complementary architectural approach, reducing the cache at
the representation level before any eviction or quantization is applied.

\subsection{Positional encoding}

RoPE \citep{su2024rope} encodes relative position by rotating query and
key vectors and has become the dominant positional scheme in modern
language models. ALiBi \citep{press2022alibi} instead adds
position-dependent biases to attention logits, avoiding explicit
positional embeddings. Extensions such as YaRN \citep{peng2023yarn},
xPos \citep{sun2023xpos}, and PI \citep{chen2023positioninterp}
adapt RoPE for longer contexts through frequency scaling or interpolation.
KPA introduces Kronecker-Separable RoPE, which applies rotary embeddings
directly to the factorized representation by rotating only the
feature-level factor, preserving the standard relative-position property
without materializing full key tensors or requiring a decoupled
positional branch.

\subsection{Kronecker factorization in transformers.}
Kronecker decomposition has been applied to transformers in several distinct
settings, all targeting static weight compression rather than inference-time
KV cache reduction. KroneckerBERT~\citep{tahaei2021kroneckerbert} compresses
BERT weight matrices via Kronecker decomposition with knowledge distillation.
KnGPT2~\citep{edalati2021knowgpt2} applies the same idea to GPT-2, replacing
linear layer weights with Kronecker-structured matrices and using intermediate
layer distillation to recover accuracy. KronA~\citep{edalati2022krona}
uses Kronecker products as a drop-in replacement for LoRA adapters in
parameter-efficient fine-tuning, achieving a better expressivity-to-parameter
trade-off than low-rank decomposition. HOT~\citep{omranpour2024higher}
decomposes the attention score matrix as a sum of Kronecker products to
reduce training-time computation for multi-dimensional encoder inputs.
KPA differs from all of these: it factorizes dynamic, per-token key and
value activations rather than static weight matrices or score matrices,
targeting inference-time memory reduction in autoregressive decoding,
a setting none of the above address.

\end{document}